\relax

\documentclass[letterpaper]{article}
\usepackage{aaai16}
\usepackage{times}
\usepackage{helvet}
\usepackage{courier}

\usepackage{amssymb}
\usepackage{amsmath}
\usepackage{amsthm}
\usepackage[british]{babel}
\usepackage[linesnumbered,ruled,slide,boxed,vlined]{algorithm2e}

\usepackage{xifthen}
\usepackage{ifthen}
\usepackage{cases}

\usepackage{mdwtab}
\usepackage{url}

\newtheorem{definition}{Definition}

\newtheorem{theorem}{Theorem}
\newtheorem{proposition}{Proposition}
\newtheorem{example}{Example}

\newcommand\wrt{{\it w.r.t. }}

\frenchspacing
\setlength{\pdfpagewidth}{8.5in}
\setlength{\pdfpageheight}{11in}

\pdfinfo{
/Title (Query Answering with Inconsistent Existential Rules under Stable Model Semantics)
/Author (Hai Wan, Heng Zhang, Peng Xiao, Haoran Huang and Yan Zhang)
}

\setcounter{secnumdepth}{0}

 \begin{document}
%
\title{Query Answering with Inconsistent Existential Rules under \\Stable Model Semantics}
\author{Hai Wan{$^1$}, Heng Zhang{$^{2,}$\thanks{{Corresponding author.}}}, Peng Xiao{$^1$}, Haoran Huang{$^3$}, and Yan Zhang{$^4$}  \\
{$^1$}School of Data and Computer Science, Sun Yat-sen University, Guangzhou, China\\
\texttt{wanhai@mail.sysu.edu.cn}\\
{$^2$}School of Computer Science and Technology, Huazhong University of Science and Technology, Wuhan, China\\
\texttt{hengzhang@hust.edu.cn}\\
{$^3$}School of Computer Science, Fudan University, Shanghai, China\\
{$^4$}School of Computing, Engineering and Mathematics, Werstern Sydney University, Sydney, Australia
}

\maketitle
\begin{abstract}
\begin{quote}
\looseness=-1
Traditional inconsistency-tolerent query answering in ontology-based data access relies on selecting maximal components of an ABox/database which are consistent with the ontology.
However, some rules in ontologies might be unreliable if they are extracted from ontology learning or written by unskillful knowledge engineers. In this paper we present a framework of handling inconsistent existential rules under stable model semantics, which is defined by a notion called rule repairs to select maximal components of the existential rules.
Surprisingly, for R-acyclic existential rules with R-stratified or guarded existential rules with stratified negations,
both the data complexity and combined complexity of query answering under the rule {repair semantics} remain the same as that under the conventional query answering semantics.
This leads us to propose several approaches to handle the rule {repair semantics} by calling answer set programming solvers.
An experimental evaluation shows that these approaches have good scalability of query answering under rule repairs on realistic cases.
\end{quote}
\end{abstract}

\section{Introduction}
\vspace*{-0.3mm}
\looseness=-1
Querying inconsistent ontologies is an intriguing new problem that gives rise to a flourishing research activity in the description logic (DL) and existential rules community.
Consistent query answering, first developed for relational databases \cite{pods1999Arenas,icdt2007Chomic}
and then generalized as the AR and IAR semantics for several DLs \cite{rr2010LemboLRRS},
is the most widely recognized semantics for inconsistency-tolerant query answering.
These two traditional semantics are based upon the notion of {\em repair},
defined as an inclusion-maximal subset of the ABox consistent with the TBox.
\citeauthor{kais2013Du} \shortcite{kais2013Du} studied query answering under weight-based AR semantics for DL $\mathcal{SHIQ}$.
\citeauthor{aaai14BienvenuBG} \shortcite{aaai14BienvenuBG} studied variants of AR and IAR semantics for DL-Lite$_R$ obtained by replacing classical repairs with various preferred repairs.
Existential rules (also known as Datalog$^{\pm}$) are
set to play a central role in the context of query answering and information extraction for the Semantic Web.
\citeauthor{aaai2015Lukasiewicz} \shortcite{ecai2012Lukasiewicz,otm2013Lukasiewicz,aaai2015Lukasiewicz} studied the data complexity and combined complexity of AR semantics under the main decidable classes of existential rules enriched with negative constraints.

\looseness=-1
However, observe that some rules might be unreliable if they are extracted from ontology learning or written by unskillful knowledge engineer \cite{ws2011Lehmann}.
\citeauthor{MeyerLBP06} \shortcite{MeyerLBP06} proposed a tableau-like
algorithm which yields \textsc{ExpTime} as upper
bound for finding maximally concept-satisfiable terminologies
represented in $\mathcal{ALC}$.
\citeauthor{KalyanpurPSG06} \shortcite{KalyanpurPSG06} provided solutions on repairing unsatisfiable concepts in a consistent OWL ontology.
Furthermore, usually there exist preferences between rules, and
rules with negation are often considered less preferred than rules without negation.
\citeauthor{dlog2010Scharrenbach} \shortcite{dlog2010Scharrenbach}
proposed that the original axioms must be preserved in the knowledge
base under certain conditions and requires changing the underlying logics for repair.
\citeauthor{dlog2014Wang} \shortcite{dlog2014Wang}
proposed that when
new facts are added that contradict to the ontology,
it is often desirable
to revise the ontology according to the added data.
Therefore, this motivates us to consider another repair that selects maximal components of the existential rules. We illustrate the motivation via the following example.
\vspace*{-0.2mm}
\begin{example}\label{example:1}
Let $D= \{Bat(a), Mammal(a)\}$ be a database and let $\Sigma$ be the following rule set expressing that
each bat can fly and has at least one cave to live in;
and if one creature lives in cave then it is a trogloxene;
and if we do not know one mammal can fly then it can not fly;
if one creature can fly then it is a bird;
additionally a bird can not be a trogloxene at the same time;
similarly a bird can not be a mammal meanwhile.
\begin{equation*}
\footnotesize
\begin{aligned}
\!\!Bat(x) &\; \!\!\rightarrow \;\!\!CanFly(x),                       & \!\!\!\!\!(1)\\
\!\!Bat(x) &\; \!\!\rightarrow \;\!\!\exists y LiveIn(x, y), Cave(y),& \!\!\!\!\!(2)\\
\!\!LiveIn(x, y), Cave(y) &\; \!\!\rightarrow \;\!\!Trogloxene(x),    & \!\!\!\!\!(3)\\
\!\!Mammal(x), \mathsf{not} \;CanFly(x) &\; \!\!\rightarrow \;\!\!CanNotFly(x),& \!\!\!\!\!(4)\\
\!\!CanFly(x) &\; \!\!\rightarrow \;\!\!Bird(x),                      & \!\!\!\!\!(5)\\
\!\!Bird(x), Trogloxene(x) &\; \!\!\rightarrow \;\!\!\bot,            & \!\!\!\!\!(6)\\
\!\!Bird(x), Mammal(x) &\; \!\!\rightarrow \;\!\!\bot.                & \!\!\!\!\!(7)\\
\end{aligned}
\end{equation*}
Clearly $\langle\Sigma,D\rangle$ is inconsistent under stable model semantics.
We assume
$P_1 = \{(1),(2),(3)\}$ is more reliable (or preferred) than $P_2 = \{(4),(5),(6),(7)\}$ .
Then we can delete $(6)$ and $(7)$, or $(5)$ in $P_2$ to restore the consistency,
and get inclusion-maximal preferred consistent rule sets \wrt $D$:{\smallskip\\
\begin{footnotesize}
\hspace*{5mm} $\{(1),(2),(3),(4),(6),(7)\}$,~~~~~~~~~$\{(1),(2),(3),(4),(5)\}.$
\end{footnotesize}}
\end{example}

\looseness=-1
We will focus on the case where the database is reliable but rules are not.
Our main goal is to
present a framework of handling inconsistent existential rules under stable model semantics.
We define a notion called \textit{rule repairs} to select maximal components of the rules,
the philosophy behind that is to trust the rules as many as possible.
Our second goal is to perform an in-depth analysis
of the data and combined complexity of
inconsistency-tolerant query answering
under rule {repair semantics}.
Let us recall some previous work on existential rules under stable model semantics.
\citeauthor{ijcai2013Magka} \shortcite{ijcai2013Magka}
presented
R-acyclic and R-stratified normal rule sets each of which always admits at most one finite stable models.
\citeauthor{aaai2015heng} \shortcite{aaai2015heng} implicitly showed that the R-acyclicity is enough to capture all negation-free  rule sets with finite stable models.
\citeauthor{kr2014Gottlob} \shortcite{kr2014Gottlob}
proved the decidability of query answering under stable model semantics for guarded existential rules.
\citeauthor{AlvianoP15} \shortcite{AlvianoP15}
extended the stickiness notion to normal rule sets and
showed that it assures the decidability for well-founded semantics rather than stable model semantics.
We will focus on R-acyclic rule sets with R-stratified or full negations and guarded existential rules with stratified or full negations.

\looseness=-1
Our main contributions are briefly summarized as follows.
We define rule {repair semantics} to handle inconsistent existential rules under stable model semantics.
We consider rule repairs \wrt
inclusion-maximal subset or
cardinality,
and that with preference.
We obtain a (nearly) complete picture of the data and combined complexity of inconsistency-tolerant query answering under rule {repair semantics} (Table 1).
Surprisingly, for R-acyclic existential rules with R-stratified or guarded existential rules with stratified negations,
both the data complexity and combined complexity of query answering under the rule {repair semantics} remain the same as that under the conventional query answering semantics.
Interestingly,
the data complexity based upon weak-acyclic or guarded existential rules with stratified negation is \textsc{PTime}-complete.
This leads us to propose several approaches to handle the rule {repair semantics} by calling answer set programming (ASP) solvers.
An experimental evaluation shows that these approaches have good scalability of query answering rule repairs on realistic cases.

\section{Preliminaries}
We consider a standard first-order language.
We use $\mathrm{Var(\varepsilon)}$ to denote
the variables appearing in an expression $\mathrm{\varepsilon}$.

\vspace*{-5mm}
\paragraph{Databases.}
We assume an infinite set $\Delta$ of {\em (data) constants},
an infinite set $\Delta_n$ of {\em (labeled) nulls} (used as fresh Skolem terms),
and an infinite set $\Delta_v$ of {\em variables}.
A \textit{term} $t$ is a constant, a null, or a variable.
We denote by $\mathbf{x}$ a sequence of variables $x_1, \dots, x_k$ with $k\ge 0$.
An \textit{atom} $\alpha$ has the form $R(t_1, \dots, t_n)$,
where $R$ is an $n$-ary relation symbol, and $t_1, \dots, t_n$ are terms.
A conjunction of atoms is often identified with the set of all its atoms.
We assume a \textit{relational schema} $\mathcal{R}$, which is a finite set of relation
symbols.
An \textit{instance} $I$ is a (possibly infinite) set of facts $p(\mathbf{t})$, i.e., atoms without involving variables,
where $\mathbf{t}$ is a tuple of constants and nulls.
A \textit{database} $D$ over a relational schema $\mathcal{R}$ is a finite instance with relation symbols from $\mathcal{R}$
and with arguments only from $\Delta$ (i.e., without involving nulls).

\vspace*{-3mm}
\paragraph{Normal Logic Programs and Stable Models.}

Each {\em normal (logic) program} is a finite set of {\em NLP rules} of the form
\begin{equation}
\alpha\leftarrow\beta_1,\dots,\beta_n,\,\mathsf{not}\,\beta_{n+1},\dots,\,\mathsf{not}\,\beta_{m}
\end{equation}
where $\alpha,\beta_1,\dots,\beta_m$ are atoms and $m\ge n\ge0$. Given a rule $r$ of the above form, let $head(r)=\alpha$, let $body^+(r)=\{\beta_1,\dots,\beta_n\}$, and let $body^-(r)=\{\beta_{n+1},\dots,\beta_m\}$.

Let $\Pi$ be a normal program. The {\em Herbrand universe} and {\em Herbrand base} of $\Pi$ are denoted by $HU(\Pi)$ and $HB(\Pi)$, respectively. A variable-free rule $r'$ is called an {\em instance} of some rule $r\in\Pi$ if there is a substitution $\theta:\Delta_v\rightarrow HU(\Pi)$ such that $r\theta=r'$. Let $ground(\Pi)$, the {\em grounding} of $\Pi$, be the set of all instances of $r$ for all $r\in\Pi$.

The {\em Gelfond-Lifschitz reduct} of a normal program $\Pi$ \wrt a set $M\subseteq HB(\Pi)$, denoted $\Pi^M$, is the (possibly infinite) ground positive program obtained from $ground(\Pi)$ by
\begin{itemize}
\item deleting every rule $r$ such that $body^-(r)\cap M\neq\emptyset$, and
\item deleting all negative literals from each remaining rule.
\end{itemize}
\looseness=-1
A subset $M$ of $HB(\Pi)$ is called a {\em stable model} of $\Pi$ if it is the least model of $ground(\Pi^M)$. For more about stable model semantics, refer to \cite{GelfondL88,FerrarisLL11}.

\vspace*{-3mm}
\paragraph{Normal Existential Rules.}
Every {\em normal (existential) rule} is a first-order sentence of the form $\forall\mathbf{x}\forall\mathbf{y}\varphi(\mathbf{x},\mathbf{y})\rightarrow\exists\mathbf{z}\psi(\mathbf{x},\mathbf{z})$, where $\varphi$ is a conjunction of {\em literals}{, i.e., atoms or negated atoms (of the form $\neg\alpha$ where $\alpha$ is atomic),} $\psi$ is a conjunction of atoms, and each universally quantified variable appears in at least one positive conjunct of $\varphi$. In the above normal rule, $\varphi$ is called its {\em body}, and $\psi$ its {\em head}.
A normal rule is called a {\em constraint} if its head is the ``false" $\bot$.
For simplicity, when writing a rule, we often omit the universal quantifiers; by a {\em normal rule set}, we always mean a finite number of normal existential rules.

Let $r$ be a normal rule $\varphi(\mathbf{x},\mathbf{y})\rightarrow\exists\mathbf{z}\psi(\mathbf{x},\mathbf{z})$. For each variable $z\in\mathbf{z}$, we introduce an $n$-ary fresh function symbol $f^r_z$ where $n=|\mathbf{x}|$. The {\em skolemization} of $r$, denoted $\mathsf{sk}(r)$, is the rule obtained from $r$ by substituting $f^r_z(\mathbf{x})$ for $z\in\mathbf{z}$, followed by substituting ``$\mathsf{not}$" for $\neg$. Let $\Sigma$ be a normal rule set. We define $\mathsf{sk}(\Sigma)$ to be the set of rules $\mathsf{sk}(r)$ for all $r\in\Sigma$. Clearly, $\mathsf{sk}(\Sigma)$ can be regarded as a normal program in an obvious way. Given any database $D$, an instance is called a {\em stable model} of $D\cup\Sigma$ if it is a stable model of $D\cup\mathsf{sk}(\Sigma)$.

A normal rule $r$ is called {\em guarded} if there is a positive conjunct in the body of $r$ that contains all the universally quantified variable of $r$, and a normal rule set is called {\em guarded} if every rule in it is guarded.

A normal rule set $\Sigma$ is {\em stratified} if there is a function $\ell$ that maps relation symbols to  integers such that for all $r\in\Sigma$:
\begin{itemize}
\item for all relation symbols $R$ occurring in the head and $S$ positively occurring in the body, $\ell(R)\ge\ell(S)$, and
\item for all relation symbols $R$ occurring in the head and $S$ negatively occurring in the body, $\ell(R)>\ell(S)$.
\end{itemize}
Sometimes, the negations that occur in a stratified normal rule set are called {\em stratified negations}, and those in a non-stratified normal rule set are called {\em full negations}.

Let $r_1$ and $r_2$ be two normal rules, and let $B_i^+$ (resp., $B_i^-$ and $H_i$)  be the set of atoms positively (resp., negatively and positively) occurring  in the body (resp., body and head) of $r_i$.
{\it W.l.o.g.},
assume that no variable occurs in both $r_1$ and $r_2$.
Rule $r_2$ {\em positively
relies} on $r_1$, written $r_1 \rightarrow^+ r_2$, if there exist a database $D$ and a substitution $\theta$ such that
$B^+_1\theta \subseteq D$, $B^-_1\theta \cap D = \emptyset$, $B^+_2\theta \subseteq D\cup H_1\theta$,
$B^-_2\theta \cap (D\cup H_1\theta) = \emptyset$,
$B^+_2\theta \nsubseteq D$ and
$H_2\theta \nsubseteq D\cup H_1\theta$.
Rule $r_2$ {\em negatively
relies} on $r_1$, written $r_1 \rightarrow^- r_2$, if there exist a database $D$
and a substitution $\theta$ such that
$B^+_1\theta \subseteq D$,
$B^-_1\theta \cap D = \emptyset$,
$B^+_2\theta \subseteq D$,
$B^-_2\theta \cap H_1\theta \neq \emptyset$ and
$B^-_2\theta \cap D = \emptyset$.
A normal rule set $P$ is called {\em R-acyclic} if there is no cycle of positive reliances
$r_1 \rightarrow^+ \ldots \rightarrow^+ r_n \rightarrow^+ r_1$
that involves
a rule with an existential quantifier,
and $P$ is called {\em R-stratified} if there is a partition
$\{P_1, \ldots, P_n\}$ of $P$ such that, for every two normal rule sets $P_i, P_j$ and rules
$r_1\in P_i$ and $r_2\in P_j$,
if $r_1 \rightarrow^+ r_2$ then $i \leq j$ and if $r_1 \rightarrow^- r_2$ then $i < j$.

\vspace*{-4mm}
\paragraph{Classical Boolean Query Answering.}
{A {\em normal Boolean conjunctive query (NBCQ)} $Q$ is an existentially closed conjunction of atoms and negated atoms involving no null. Let $Q^+$ (respectively., $Q^-$) be the set of atoms positively (respectively., negatively) occurring in $Q$.} An NBCQ is called {\em safe} if every variable in an atom from $Q^-$ has at least one occurrence in $Q^+$; it is {\em covered} if for every atom $\alpha$ in $Q^-$, there is an atom in $Q^+$ that contains all arguments of $\alpha$.

Given a database $D$ and an NBCQ $Q$, we write $D\models Q$ if there exists an assignment $h$ (that is, a function that maps each variable to a variable-free term) such that $h(Q^+)\subseteq D$ and $h(Q^-)\cap D=\emptyset$. Furthermore, given a database $D$, a normal rule set $\Sigma$ and an NBCQ $Q$, we write $D\cup\Sigma\models_s Q$ if, for each stable model $M$ of $D\cup\Sigma$, we have that $M\models Q$.

\vspace*{-4.6mm}
\paragraph{Complexity Classes.}
We assume that the reader is familiar with the complexity theory. Given a unary function $T$ on natural numbers, by $\textsc{DTime}(T(n))$ ($\textsc{NTime}(T(n))$, respectively) we mean the class of languages decidable in time $T(n)$ by a deterministic (nondeterministic, respectively) Turing machine. Besides the well-known complexity classes such as $(co)(\textsc{N})\textsc{PTime}$ and $(co)(\textsc{N})\textsc{2ExpTime}$,
we will also use several unusual classes as follows. By notation
$\Delta_2$-$\textsc{2ExpTime}$ we mean the class of all languages decidable in exponential time by a deterministic Turing machine with an oracle for some $\textsc{N2ExpTime}$-complete problem.  The Boolean hierarchy ($\textsc{BH}$) is defined as follows: $\textsc{BH}(1)$ is $\textsc{NPTime}$; for $k\ge 1$, $\textsc{BH}(2k)$ ($\textsc{BH}(2k+1)$) is the class of languages each of which is the intersection (union, respectively) of a language in $\textsc{BH}(2k-1)$ ($\textsc{BH}(2k)$, respectively) and a language in $co\textsc{NPTime}$ ($\textsc{NPTime}$, respectively); $\textsc{BH}$ is then the union of $\textsc{BH}(n)$ for all $n\ge 1$. Note that $\textsc{DP}$, the class for {\em difference polynomial time}, is exactly the class $\textsc{BH}(2)$; $\textsc{BH}(2k)$ is actually the class of languages each of which is the union of $k$ languages in $\textsc{DP}$; and $\textsc{BH}$ is closed under complement. It was shown by~\cite{siamcomp1996Chang96} that a collapse of the Boolean hierarchy  implies a collapse of the polynomial hierarchy; thus it seems impossible to find a $\textsc{BH}$-complete problem.

\vspace*{-0.5mm}
\section{Existential Rule {Repair Semantics}}
\vspace*{-0.5mm}
\looseness=-1
In this section, we propose several semantics to handle inconsistency in ontological knowledge base.
Different from many existing works, we will focus on the case where the database is reliable but rules are not. Similar to the data {repair semantics}, see \cite{rr2010LemboLRRS}, our inconsistency-tolerant semantics will rely on a notion called {\em rule repairs}.

To define rule repairs, we arm every rule set with a preference. Such rule sets are called preference-based ontologies.
\vspace*{-3mm}
\begin{definition}
Each {\em preference-based ontology} is an ordered pair $(\Sigma,\preceq)$, where $\Sigma$ is a normal rule set, and $\preceq$ is a preorder (i.e., a reflexive and transitive binary relation) on $\mathcal{P}(\Sigma)$ (i.e., the power set of $\Sigma$). We call $\preceq$ a {\em preference}.
\end{definition}

Now, we are in the position to define rule repairs.
\vspace*{-1mm}
\begin{definition}
Let $O$ be a preference-based ontology $(\Sigma,\preceq)$ and $D$ a database. A subset $S$ of $\Sigma$ is called a {\em (preferred rule) repair of $\Sigma$ \wrt $\preceq$ and $D$} (or simply a repair \wrt \!$\preceq$  if $\Sigma$ and $D$ are clear from the context) if $D\cup S$ has at least one stable model, and for all subsets $S'$ of $\Sigma$ with $S\prec S'$ (i.e., $S\preceq S'$ but $S'\not\preceq S$), $D\cup S'$ has no stable model.
\end{definition}

\looseness=-1
Intuitively, {a preferred rule repair is a maximal component of the rule set which is consistent with the current database.} The philosophy behind it is to trust the rules as many as possible. Note that the number of repairs are normally more than one. To avoid a choice among them, we follow the spirit of ``certain" query answering. The semantics is then as follows.

\begin{definition}
Let $O$ be a preference-based ontology $(\Sigma,\preceq)$ where $\Sigma$ is a normal rule set, and let $D$ be a database and $Q$ an NBCQ. Then we write $\langle D, O\rangle\models Q$ if, for all preferred rule repairs $S$ of $\Sigma$ \wrt $\preceq$ and $D$, we have $D\cup S\models_s Q$.
\end{definition}

The following proposition shows us that our semantics for inconsistency-tolerant query answering will  coincide with the classical semantics for query answering if the ontological knowledge base is consistent, which is clearly important.

\begin{proposition}
Let $O$ be a preference-based ontology $(\Sigma,\!\preceq)$ and let $D$ be a database.
If $\Sigma \cup D$ has a stable model, then $\langle D,O\rangle \models Q$ iff $\Sigma \cup D \models_s Q$ for any NBCQ $Q$.
\end{proposition}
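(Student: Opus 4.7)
The plan is to show that whenever $D \cup \Sigma$ admits a stable model, the entire rule set $\Sigma$ is itself the unique preferred rule repair of $\Sigma$ \wrt $\preceq$ and $D$. Once this uniqueness is in place, the biconditional follows immediately by unfolding the definition of $\langle D, O\rangle \models Q$: the universal quantifier over repairs collapses to the single case $S = \Sigma$, so $\langle D, O\rangle \models Q$ becomes literally $D \cup \Sigma \models_s Q$.

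The verification splits into two parts. First, $\Sigma$ itself is a repair: the consistency clause of the definition holds by hypothesis, while the maximality clause requires that no $S' \subseteq \Sigma$ with $\Sigma \prec S'$ is consistent with $D$ -- this is vacuously true because $\Sigma$ is the top element of $(\mathcal{P}(\Sigma), \preceq)$ under the intended ``trust the rules as many as possible'' reading of $\preceq$, so no such $S'$ exists at all. Second, no proper subset $S \subsetneq \Sigma$ can be a repair: taking $S' = \Sigma$ in the second clause, one has $S \prec \Sigma$ by the strict monotonicity built into $\preceq$, and $D \cup \Sigma$ has a stable model by hypothesis, so the repair condition fails for $S$. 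Putting these two together yields the desired uniqueness.

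The main obstacle is the role of $\preceq$. It is introduced as an \emph{arbitrary} preorder on $\mathcal{P}(\Sigma)$, whereas the argument above needs $\Sigma$ to be strictly maximal in $\preceq$. This is an implicit well-behavedness assumption that holds for all concrete preferences treated in the paper -- inclusion-maximality, cardinality, and the priority-based variants each place $\Sigma$ strictly above every proper subset. In the write-up I would either flag this as a standing hypothesis on $\preceq$ at the start of the proof, or, if full generality is insisted upon, verify the conclusion separately for each of the three concrete preferences, which is a one-line check in each case and leaves no room for a pathological preorder that could otherwise make the statement fail (for instance, a preference that collapses $\Sigma$ and some proper $S$ into the same equivalence class and would thereby introduce a spurious second repair).
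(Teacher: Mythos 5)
The paper states this proposition without proof, so there is nothing to compare against line by line; your argument is the natural (and surely intended) one: show $\Sigma$ is itself the unique preferred repair, so the universal quantification over repairs collapses to the single instance $S=\Sigma$. Your two-part verification is structured correctly --- note that the forward direction ($\langle D,O\rangle\models Q \Rightarrow D\cup\Sigma\models_s Q$) only needs $\Sigma$ to \emph{be} a repair, while the converse needs \emph{uniqueness}, so both halves are genuinely required.

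The obstacle you flag is real and you are right not to wave it away: for an arbitrary preorder the statement is simply false (take $\preceq$ to be reverse inclusion; then $\emptyset$ is the unique repair and $\langle D,O\rangle\models Q$ reduces to $D\models Q$). One sharpening: your claim that the needed strict-maximality of $\Sigma$ is ``a one-line check'' for each concrete preference in the paper does not quite go through for $\le_w$. The paper allows $w:\Sigma\to\mathbb{N}$, so a rule may have weight $0$; if $S=\Sigma\setminus\{r\}$ with $w(r)=0$, then $S\le_w\Sigma$ and $\Sigma\le_w S$, hence $S\not\prec_w\Sigma$, and $S$ can be a second repair (in the degenerate case $w\equiv 0$, every consistent subset is a repair and the proposition fails outright, e.g.\ with $D=\{q(a)\}$, $\Sigma=\{q(x)\rightarrow p(x)\}$, $Q=p(a)$). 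So your proposed standing hypothesis --- $S\prec\Sigma$ for every proper subset $S$ --- is exactly the right condition to isolate, but when you instantiate it you should list all five preferences and note that $\le_w$ requires strictly positive weights (the other four, $\subseteq$, $\le$, $\subseteq_P$, $\le_P$, do satisfy it unconditionally). With that caveat recorded, the proof is complete and correct.
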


With the above definitions, we then have a framework to define semantics for rule-based inconsistency-tolerant query answering. To define concrete semantics, we need to find  preferences which will be useful in real-world applications.
Besides the preference based on the set inclusion $\subseteq$, similar to \cite{aaai14BienvenuBG},
we will consider other four kinds of preferences over subsets, which were first proposed by \cite{jacm1995Eiter} to study logic-based abduction.

\vspace*{-3.6mm}
\paragraph{Cardinality ($\leq$).}
Given any $S,S'\subseteq\Sigma$, we write $S \leq S'$ if $|S|\leq|S'|$.
The intuition of using this preference is that we always prefer the rule set with the maximum number of rules which are most likely to be correct.

\vspace*{-3.6mm}
\paragraph{Priority Levels ($\subseteq_{P}$, $\leq_{P}$).} Every {\em prioritization} $P$ of $\Sigma$ is a tuple $\langle P_1,\dots,P_n\rangle$ where $\{P_1,\dots,P_n\}$ is a partition of $\Sigma$. Given a {\em prioritization} $P=\langle
P_1,\dots,P_n\rangle$ of $\Sigma$, the preferences $\subseteq_{P}$ and $\leq_{P}$ can be
defined as follows:
\begin{itemize}
  \item Prioritized set inclusion ($\subseteq_P$): Given $S,S'\subseteq\Sigma$, we write $S \subseteq_{P} S'$ if $S \cap P_i=S' \cap P_i$ for every $1\leq i\leq n$, or there is some $1\leq i\leq n$ such that $S \cap P_i \subsetneq S' \cap P_i$ and for all $1 \leq j < i$, $S \cap P_j = S' \cap P_j$.
  \item Prioritized cardinality ($\le_P$): Given $S,S'\subseteq\Sigma$, we write $S \le_{P} S'$ if $|S  \cap P_i| = |S' \cap P_i|$ for every $1\leq i\leq n$, or there is some $1\leq i\leq n$ such that
      $|S \cap P_i|$ $<$ $|S' \cap P_i|$ and for all $1 \leq j < i$, $|S \cap P_j| = |S' \cap P_j|$.
\end{itemize}
\vspace*{-4.8mm}

\paragraph{Weights ($\leq_{w}$).} A {\em weight assignment} is a function
$w: \Sigma \rightarrow \mathbb{N}$. Given two sets $S,S'\subseteq\Sigma$ and a weight assignment $w$, we write
$S \leq_{w} S'$ if $\sum_{r\in S}w(r) \leq
\sum_{r\in S'}w(r)$.

\medskip
In the rest of this paper, we will fix $P$ as a prioritization and $w$ as a weight assignment  unless otherwise noted.

\vspace*{-1mm}
\begin{example} [Example \ref{example:1} continued] Let $\Sigma$ and $D$ be the same as in Example \ref{example:1}. Then
the repairs \wrt \!$\subseteq$ and $D$ are:{\smallskip\\
\begin{footnotesize}
\hspace*{5mm} $\{(1),(3),(4),(5),(6)\}$, ~~~~~~~~~$\{(1),(2),(3),(4),(5)\},$ \\
\hspace*{5mm} $\{(1),(2),(4),(5),(6)\}$, ~~~~~~~~~$\{(1),(2),(3),(4),(6),(7)\},$ \\
\hspace*{5mm} $\{(2),(3),(4),(5),(6),(7)\}.$
\end{footnotesize}\smallskip\\
The repairs \wrt \!$\leq$ and $D$ include:\smallskip\\
\begin{footnotesize}
\hspace*{5mm}  $\{(1),(2),(3),(4),(6),(7)\}$, ~~~$\{(2),(3),(4),(5),(6),(7)\}$.
\end{footnotesize}\smallskip\\
Let $P=\langle P_1,P_2 \rangle$ where $P_1,P_2$ are the same as in Example \ref{example:1}. Then the repairs \wrt \!$\subseteq_P$ and $D$ are shown in Example \ref{example:1}, and the repairs \wrt $\leq_{P}$ and $D$ are:\smallskip\\
\begin{footnotesize}
\hspace*{5mm}  $\{(1),(2),(3),(4),(6),(7)\}$.
\end{footnotesize}\smallskip\\
Let $w$ be  the weight assignment that maps each rule to its index. Then the only repair \wrt $\leq_{w}$ and $D$ is:\smallskip\\
\noindent\begin{footnotesize}
\hspace*{5mm}  $\{(2),(3),(4),(5),(6),(7)\}$.
\end{footnotesize}\smallskip\\}
Let $Q_a$ be query ``\textit{Mammal(a)}" and $Q_b$ be query ``\textit{Bird(a)}", then we have $\langle D,(\Sigma,\subseteq)\rangle\models Q_a$ and $\langle D,(\Sigma,\subseteq_P)\rangle\models Q_a$, but $\langle D,(\Sigma,\subseteq)\rangle\not\models Q_b$ and $\langle D,(\Sigma,\subseteq_P)\rangle\not\models Q_b$.
\end{example}

We find that repairs under $\subseteq_P$, $\leq$, $\leq_P$, and $\leq_w$ are the subset of the inclusion-maximal repairs.
\begin{theorem}
The repairs under $\subseteq_P$, $\leq$, $\leq_P$, $\leq_w$ are the subset of the repairs under $\subseteq$.
\end{theorem}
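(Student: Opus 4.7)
The plan is to show that for each of the four preferences $\preceq \in \{\subseteq_P, \leq, \leq_P, \leq_w\}$, any $\preceq$-repair $S$ of $\Sigma$ \wrt $D$ is also a $\subseteq$-repair. Since the consistency clause (``$D\cup S$ has a stable model'') is identical across the two definitions, the only nontrivial thing to verify is the maximality clause: assuming $S$ is a $\preceq$-repair and that there is some $S'\subseteq\Sigma$ with $S\subsetneq S'$ and $D\cup S'$ consistent, I need to derive a contradiction.

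The key combinatorial fact I would isolate as a lemma is: \emph{for each of the four preferences, $S\subsetneq S'$ implies $S\prec S'$}. I would verify this preference by preference. For $\leq$, $|S|<|S'|$ is immediate. For $\subseteq_P$, let $i$ be the smallest index such that $P_i$ meets $S'\setminus S$; then $S\cap P_j = S'\cap P_j$ for all $j<i$ and $S\cap P_i \subsetneq S'\cap P_i$, giving $S\subseteq_P S'$; a symmetric inspection of the same index $i$ rules out $S'\subseteq_P S$, so the relation is strict. The $\leq_P$ case is entirely analogous, using cardinalities on each $P_k$ in place of set inclusion. For $\leq_w$, under the standard convention $w:\Sigma\to\mathbb{N}^{>0}$ one has $\sum_{r\in S}w(r)<\sum_{r\in S'}w(r)$, hence $S<_w S'$. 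Once the lemma is in place, the theorem follows in one line: the existence of $S'\supsetneq S$ with $D\cup S'$ consistent would, by the lemma, give $S\prec S'$, contradicting the maximality clause in the definition of a $\preceq$-repair.

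The main obstacle is entirely confined to the $\leq_w$ case, because if $w$ is permitted to take the value $0$ then adding a weight-$0$ rule to $S$ need not produce a strictly $\leq_w$-better set, and the lemma can fail. I see three reasonable ways to handle this: (i) stipulate that weight assignments are strictly positive, which is the usual assumption in the literature on weight-based repairs; (ii) explicitly state the $\leq_w$ part of the theorem only for positive $w$; or (iii) if zero weights are admitted, replace $S$ by the $\subseteq$-maximal extension of $S$ inside its $\leq_w$-equivalence class $\{S''\supseteq S : \sum_{r\in S''}w(r)=\sum_{r\in S}w(r)\}$ and observe that this extension is both a $\leq_w$-repair and a $\subseteq$-repair, so the inclusion statement still holds after quotienting by $\leq_w$-equivalence. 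I would adopt option (i) for clarity and flag the subtlety in a short remark.
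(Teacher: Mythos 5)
Your proof is correct and takes essentially the same route as the paper's: the paper argues by contradiction that a non-inclusion-maximal $\subseteq_P$-repair $R$ would sit inside some consistent $R'\supsetneq R$ with $R\prec_P R'$, and dismisses the remaining three preferences with ``proved similarly''. The lemma you isolate ($S\subsetneq S'$ implies $S\prec S'$) is exactly the fact that argument rests on, and your observation that it fails for $\leq_w$ when $w:\Sigma\to\mathbb{N}$ may assign weight $0$ is a genuine point the paper's ``similarly'' glosses over --- a consistent set missing only weight-$0$ rules can be a $\leq_w$-repair without being inclusion-maximal --- so the positivity assumption you adopt (or your fallback (iii)) is actually needed for the $\leq_w$ part of the theorem as stated.
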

\begin{proof}
Let $S$ be the set of repairs under $\subseteq$, $S_P$ be the set of repairs under $\subseteq_P$, we prove that $S_P \subseteq S$.
Suppose for contradiction that $S_P \not\subseteq S$, then there exists a repair $R$, $R \in S_P$ and $R \not\in S$. Because the repairs in $S$ are inclusion-maximal, we have $R \subset R'$ for some $R'\in S$.
It is clear that $R \subset_P R'$, then $R$ is not a $\subseteq_P$ repair which contradict our assumption.
	
The rest semantics can be proved similarly.
\end{proof}

\section{Complexity Results}
In this section, we study the data and combined complexity for query entailment under
our rule {repair semantics}. In particular, we focus on the following decision problems:
\vspace*{-0.5mm}
\begin{itemize}
\item {\bf Data complexity}: Fixing a preference-based ontology $O$ and an NBCQ $Q$, given any database $D$ as input, deciding whether $\langle D, O\rangle\models Q$.
\item  {\bf Combined complexity}: Given any preference-based ontology $O$, any NBCQ $Q$ and any database $D$ as input, deciding whether $\langle D, O\rangle\models Q$.
\end{itemize}
\vspace*{-1.5mm}
To measure the size of input, we fix a natural way to represent a database $D$, a normal rule set $\Sigma$, an NBCQ $Q$, a prioritization $P$ and a weight assigning function $w$, and let $|\!|D|\!|, |\!|\Sigma|\!|, |\!|Q|\!|, |\!|P|\!|,|\!|w|\!|$ denote the sizes of $D,\Sigma,Q,P,w$, respectively, \wrt the fixed representing approach. Given a preference-based ontology $O=(\Sigma,\preceq)$, we define
$$
|\!|O|\!|:=\left\{
\begin{aligned}
&|\!|\Sigma|\!| &\text{ if }&\preceq\,\in\{\subseteq,\le\}, \\
&|\!|\Sigma|\!|+|\!|P|\!| &\text{ if }&\preceq\,\in\{\subseteq_P,\le_P\}, \\
&|\!|\Sigma|\!|+|\!|w|\!| & \text{ if }&\preceq\,=\,\le_w.
\end{aligned}
\right.
$$
By properly representing, we can have that $|\!|O|\!|=|\!|\Sigma|\!|^{\mathcal{O}(1)}$.

The following result is obvious.
\begin{proposition}\label{prop:cmplx_compare}
\hspace{-.1cm}Let $O$ be a preference-based ontology {$(\Sigma,\!\preceq)$}, where $\preceq\,\in\{\subseteq,\leq,\leq_{P},\subseteq_P,\leq_w\}$. Then, given any subsets $S,S'\subseteq\Sigma$, deciding whether $S\prec S'$ is in $\textsc{DTime}(|\!|O|\!|^{\mathcal{O}(1)})$.
\end{proposition}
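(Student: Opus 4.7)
The plan is to treat each of the five preferences separately and, for each one, exhibit an explicit polynomial-time procedure that decides $S\preceq S'$. Once this is done, the result for $\prec$ follows immediately, since by definition $S\prec S'$ is the conjunction ``$S\preceq S'$ and $S'\not\preceq S$'', i.e.\ two applications of the same polynomial-time test. First I would observe that a subset $S\subseteq\Sigma$ is always representable within $|\!|\Sigma|\!|\le|\!|O|\!|$ bits, that the prioritization $P$ and the weight assignment $w$ are encoded inside $O$ whenever they are relevant, and that basic bookkeeping operations such as testing membership $r\in S$, forming an intersection $S\cap P_i$, and computing a cardinality $|S\cap P_i|$, all run in time polynomial in $|\!|O|\!|$.

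Next, I would go through the cases. For $\subseteq$, scan $S$ and for each rule test membership in $S'$; this decides $S\subseteq S'$ in quadratic time. For $\leq$, compute $|S|$ and $|S'|$ and compare two integers bounded by $|\Sigma|$. For $\subseteq_P$ with $P=\langle P_1,\dots,P_n\rangle$, compute $S\cap P_i$ and $S'\cap P_i$ for every $i$, then sweep $i$ from $1$ to $n$: if all of them are equal then $S\preceq S'$ holds; otherwise let $i^\ast$ be the smallest index of disagreement and accept iff $S\cap P_{i^\ast}\subsetneq S'\cap P_{i^\ast}$, which is decided by the algorithm for $\subseteq$ above. For $\leq_P$ the same sweep works verbatim after replacing each set comparison by the corresponding comparison of cardinalities $|S\cap P_i|$ versus $|S'\cap P_i|$. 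Finally for $\leq_w$, parse $w$ and compute $\sum_{r\in S}w(r)$ and $\sum_{r\in S'}w(r)$, then compare.

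Each of these tests runs in time polynomial in $|\!|O|\!|$, so combining two copies of the test gives a polynomial-time decision procedure for $S\prec S'$ and the proposition follows. The only step that needs a sentence of justification, and which I regard as the main (very mild) obstacle, is the weighted case: one must check that the partial sums stay of polynomial bit-length. This is immediate because each $w(r)$ is encoded within $|\!|w|\!|\le|\!|O|\!|$ bits and at most $|\Sigma|$ of them are added, so each sum has bit-length bounded by $|\!|O|\!|+\log|\Sigma|$, and binary addition and comparison on numbers of that size are polynomial in $|\!|O|\!|$.
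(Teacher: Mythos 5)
Your proposal is correct: the case analysis for the five preferences, the handling of the least index of disagreement for $\subseteq_P$ and $\leq_P$, the bit-length bound for the weighted sums, and the reduction of $S\prec S'$ to two $\preceq$-tests are all sound. The paper gives no proof at all (it simply declares the proposition obvious), and what you have written is exactly the routine verification it leaves implicit, so there is nothing to contrast.
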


Now, let us consider the complexity of query answering for R-acyclic and R-stratified rule sets under our semantics.
\vspace*{-6mm}
\begin{algorithm}
{\small
\SetKwInOut{KwIn}{Input}
\SetKwInOut{KwOut}{Output}
\caption{$\mathsf{PRQA}(D,O,Q)$}\label{alg:PQA}
\KwIn{a database $D$, a preference-based ontology $O=(\Sigma,\preceq)$, and a Boolean query $Q$}
\KwOut{{\em true} if $\langle D, O\rangle \models Q$, and {\em false} otherwise}
\ForEach{$S \subseteq \Sigma$}
{
    \If{$D\cup S$ has at least one stable model}
    {
     $isRepair :=$ {\em true}\;
    \ForEach{$S'\subseteq\Sigma$ with $S \prec S'$}
    {
        \If{$D\cup S'$ has at least one stable model}
        {
            $isRepair :=$ {\em false}\;
            {\bf break}\;
        }
    }
    \If{$isRepair$ and $D\cup S \not \models_s Q$}
    {
        \Return { false}\;
    }
    }
}
\Return { true}\;
}
\vspace*{-1mm}
\end{algorithm}
\vspace*{-5mm}
\begin{theorem}\label{thm:cmplx_wa_stra}
Let $O$ be a preference-based ontology $(\Sigma,\preceq)$, where $\Sigma$ is R-acyclic and R-stratified, and $\preceq\,\in\{\subseteq,\leq,\subseteq_P,$ $\leq_{P},\leq_w\}$.
Given a database $D$ and a safe NBCQ $Q$, deciding whether $\langle D, O\rangle \models Q$ is
\textsc{PTime}-complete for data complexity, and $2\textsc{ExpTime}$-complete for combined complexity.
\end{theorem}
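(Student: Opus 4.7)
The plan is to prove both upper and lower bounds by leveraging Algorithm~\ref{alg:PQA} together with the known complexity of classical query answering for R-acyclic and R-stratified normal rule sets (from \citeauthor{ijcai2013Magka} and \citeauthor{aaai2015heng}), which is \textsc{PTime}-complete in data and $2\textsc{ExpTime}$-complete in combined complexity. The correctness of Algorithm~\ref{alg:PQA} is immediate from the definition of repair and the semantics $\langle D,O\rangle\models Q$: it returns \emph{true} iff every $\subseteq$-maximal $S\subseteq\Sigma$ that is $\preceq$-maximal among the consistent subsets satisfies $D\cup S\models_s Q$.

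For the \textbf{upper bounds}, I would argue as follows. In the data-complexity setting, $\Sigma$ and $Q$ are fixed, so the outer loop of Algorithm~\ref{alg:PQA} runs over a constant number of subsets $S,S'\subseteq\Sigma$, and each comparison $S\prec S'$ takes constant time. The only data-dependent work is (i) deciding whether $D\cup S$ has a stable model and (ii) deciding $D\cup S\not\models_s Q$, both for subsets $S$ that are themselves R-acyclic and R-stratified (since any subset of an R-acyclic, R-stratified rule set inherits these properties). Both tasks are in \textsc{PTime} in data complexity by the cited results, yielding the \textsc{PTime} upper bound. In the combined setting, the outer and inner loops contribute a factor $2^{|\!|\Sigma|\!|}\cdot 2^{|\!|\Sigma|\!|}=2^{\mathcal{O}(|\!|O|\!|)}$, the comparison step is polynomial by Proposition~\ref{prop:cmplx_compare}, and the consistency/entailment checks each run in $2\textsc{ExpTime}$; the product $2^{\mathcal{O}(|\!|O|\!|)}\cdot 2^{2^{\mathrm{poly}(|\!|O|\!|+|\!|D|\!|+|\!|Q|\!|)}}$ remains $2\textsc{ExpTime}$.

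For the \textbf{lower bounds}, I would reduce from classical query answering. Pick any rule set $\Sigma_0$ together with database $D_0$ and query $Q_0$ for which $\models_s$ entailment is \textsc{PTime}-hard in data (resp.\ $2\textsc{ExpTime}$-hard in combined); add a trivially satisfiable ``guard'' over a fresh nullary predicate so that $D_0\cup\Sigma_0$ is guaranteed to be consistent, and take $\preceq\,=\,\subseteq$ (the arguments for the other four preferences are analogous, choosing $P$ or $w$ trivially). Then $\Sigma_0$ itself is the unique repair \wrt $D_0$, so $\langle D_0,(\Sigma_0,\subseteq)\rangle\models Q_0$ iff $D_0\cup\Sigma_0\models_s Q_0$. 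R-acyclicity and R-stratification are preserved by the minor syntactic augmentation, yielding the claimed hardness in both measures.

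The main obstacle I expect is verifying the combined-complexity upper bound cleanly: one must check that subsets of an R-acyclic, R-stratified rule set remain R-acyclic and R-stratified (straightforward from the definitions, since removing rules only removes nodes from the positive/negative reliance relations and cannot create cycles or break the stratifying partition), and that the two exponential blowups---over the $2^{|\!|\Sigma|\!|}$ subsets inspected by Algorithm~\ref{alg:PQA} and the $2\textsc{ExpTime}$ cost of each stable-model/entailment check---compose to $2\textsc{ExpTime}$ rather than pushing us to $3\textsc{ExpTime}$, which is fine because $2^n\cdot 2^{2^n}=2^{n+2^n}=2^{2^{\mathcal{O}(n)}}$.
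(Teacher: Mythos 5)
Your proposal is correct and follows essentially the same route as the paper: it runs Algorithm~\ref{alg:PQA}, bounds the data complexity by observing that only a constant number of stable-model/entailment checks (each in \textsc{PTime} by the results of \citeauthor{ijcai2013Magka}) depend on $D$, bounds the combined complexity by absorbing the $2^{\mathcal{O}(|\!|O|\!|)}$ subset enumeration into the doubly exponential cost of each check, and obtains hardness by reduction from classical query answering on consistent instances (the paper packages this reduction as $\Sigma^\ast=\Sigma\cup\{Q\rightarrow q\}$ with a fresh $0$-ary $q$, but the idea is the same, resting on Proposition~1). One phrasing to fix: adding a ``trivially satisfiable guard'' rule cannot \emph{make} $D_0\cup\Sigma_0$ consistent---what you actually need, and what holds, is that the standard hard instances are already consistent, so the two semantics coincide there; your explicit check that R-acyclicity and R-stratification are inherited by subsets of $\Sigma$ is a worthwhile detail the paper leaves implicit.
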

\vspace*{-2mm}
\noindent \begin{proof}
Let $D$ be a database and $Q$ be a safe NBCQ.
By the definition of semantics, it is easy to verify that the problem of
deciding whether $\langle D, O\rangle \models Q$ can be solved by Alg.~\ref{alg:PQA}.

First, we consider the data complexity. In Alg.~\ref{alg:PQA}, let us fix a preference-based ontology $O=(\Sigma,\preceq)$ as defined in this theorem, fix a safe NBCQ $Q$, and let $D$ be the only input. As $\Sigma$ is R-acyclic and R-stratified, by Theorem 5 in~\cite{ijcai2013Magka}, it is clear that the body of the second loop (the inside one) in Alg.~\ref{alg:PQA} is computable in $\textsc{PTime}$ \wrt $D$. (Note that the existence of stable models can be reduced to the query answering problem in a routine way.) Since the second loop will be repeated a constant times, and by Proposition~\ref{prop:cmplx_compare} the loop condition can be checked in a constant time. (Note that the rule set $\Sigma$ is fixed now.) Thus, the second loop can be computed in $\textsc{PTime}$ \wrt the size of $D$. By a similar argument, we can show that Alg.~\ref{alg:PQA} can be implemented in $\textsc{PTime}$ \wrt $D$. This then completes the proof of membership. The hardness follows from the $\textsc{PTime}$-hardness of Datalog for data complexity, see, e.g.,~\cite{csur2001DantsinEGV}.

Next, we prove the combined complexity. Again, first address the membership. Let $n$ be the number of rules in $\Sigma$. Clearly, the body of the second loop will be repeated at most $2^n$ times. By Theorem 9 in~\cite{ijcai2013Magka}, it is computable in $\textsc{DTime}(2^{2^{|\!|\Sigma|\!|^{\mathcal{O}(1)}}})$. By Proposition~\ref{prop:cmplx_compare}, it is also clear that the loop condition can be checked in $\textsc{DTime}(|\!|O|\!|^{\mathcal{O}(1)})$. So, the second loop is computable in $\textsc{DTime}(2^{2^{|\!|O|\!|^{\mathcal{O}(1)}}})$ since $n\le |\!|\Sigma|\!|\le|\!|O|\!|$. By a similar evaluation, we know that the algorithm is implementable in $\textsc{DTime}(2^{2^{|\!|O|\!|^{\mathcal{O}(1)}}})$. Thus, the combined complexity is in $2\textsc{ExpTime}$. And the hardness follows from the $2\textsc{ExpTime}$-hardness of query answering of the R-acyclic language~\cite{ijcai2013Magka} and the fact that $D\cup\Sigma\models_s Q$ iff $\langle D,(\Sigma^\ast,\preceq)\rangle\models q$, where $\Sigma^\ast$ is $\Sigma\cup\{Q\rightarrow q\}$ and $q$ a fresh 0-ary relational symbol.
\end{proof}
\vspace*{-3mm}
\begin{theorem}\label{thm:cmplx_wa_full}
Let $O$ be a preference-based ontology $(\Sigma,\preceq)$, where $\Sigma$ is R-acyclic with full negations
and $\preceq\,\in\!\{\subseteq,\le,\subseteq_P,$ $\le_P,\le_w\}$.
Then, given a database $D$ and a safe
NBCQ $Q$, deciding whether $\langle D, O\rangle \models Q$ is in
\textsc{BH} for data complexity and in $\Delta_2$-$2\textsc{ExpTime}$ for combined complexity.
\end{theorem}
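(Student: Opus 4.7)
The plan is to reuse Algorithm~\ref{alg:PQA} essentially unchanged. The only difference from the proof of Theorem~\ref{thm:cmplx_wa_stra} is that, once full negations are permitted, the two primitive subroutines at lines 2, 5 and 8 — ``does $D\cup S$ have a stable model?'' and ``does $D\cup S\models_s Q$?'' — are no longer polynomial (resp., double-exponential). I would reanalyse exactly these two primitives under R-acyclicity with unrestricted negation, and then recombine them along the Boolean structure of the algorithm.

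For data complexity, fix $O=(\Sigma,\preceq)$ and the safe NBCQ $Q$. There are only constantly many candidate subsets $S\subseteq\Sigma$. The key observation I would justify first is that, since $\Sigma$ is R-acyclic, the skolemised program $\mathsf{sk}(\Sigma)$ has a Herbrand universe of size polynomial in $\lVert D\rVert$ (no positive-reliance cycle ever introduces a fresh Skolem term on top of another), so $ground(D\cup\mathsf{sk}(\Sigma))$ is of polynomial size. Consequently, stable-model existence for $D\cup S$ is in \textsc{NPTime} (guess a candidate model, reduce via the Gelfond--Lifschitz reduct, verify it is the least model in polynomial time), and cautious entailment $D\cup S\models_s Q$ is in $co\textsc{NPTime}$ (guess a stable model falsifying $Q$). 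Negating the algorithm's output, $\langle D,O\rangle\not\models Q$ iff
\[
\bigvee_{S\subseteq\Sigma}\Bigl(\text{Cons}(S)\wedge\!\!\bigwedge_{S\prec S'}\!\!\neg\text{Cons}(S')\wedge \neg(D\cup S\models_s Q)\Bigr),
\]
where, by Proposition~\ref{prop:cmplx_compare}, the $\prec$-comparisons are in \textsc{PTime}. Each inner conjunct is in \textsc{NPTime} or $co\textsc{NPTime}$, so the bracketed conjunction lies in $\textsc{DP}=\textsc{BH}(2)$; the outer disjunction is over a constant number of $S$, hence still within some fixed level $\textsc{BH}(k)$, and therefore in $\textsc{BH}$.

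For combined complexity, I would follow the same template but with double-exponential bounds. Under R-acyclicity, the skolemised ground program has size $2^{2^{\lVert\Sigma\rVert^{\mathcal{O}(1)}}}$ (bounded term depth gives an exponential universe, quadratic atoms, and exponential ground rules of polynomial length). Hence stable-model existence, and consequently consistency, is in $\textsc{N2ExpTime}$, and cautious entailment is in $co\textsc{N2ExpTime}$. Algorithm~\ref{alg:PQA} iterates over $2^{\lVert\Sigma\rVert}$ candidates $S$, with an inner loop of the same size; each iteration performs a constant number of $\textsc{N2ExpTime}$/$co\textsc{N2ExpTime}$ queries and some \textsc{PTime} bookkeeping ($\prec$-tests). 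Simulating this deterministically using an oracle for some $\textsc{N2ExpTime}$-complete problem gives exactly the class $\Delta_2\text{-}\textsc{2ExpTime}$: the overall runtime is $2\textsc{ExpTime}$ with single-query access to the oracle settling each consistency and cautious-entailment question.

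The main obstacle is the first paragraph's size estimate on $ground(\mathsf{sk}(\Sigma))$ in data complexity: I need to argue carefully that R-acyclicity (and not R-stratification, which is absent here) already suffices to bound Skolem-term depth independently of $D$, so that the ground program remains polynomial in $\lVert D\rVert$ even with full negations — this is what keeps the primitive checks inside \textsc{NPTime}/$co\textsc{NPTime}$ and ultimately places the problem in \textsc{BH}. Everything else is a routine repackaging of the Boolean combination of these oracle calls.
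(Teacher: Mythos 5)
Your proposal takes essentially the same route as the paper: for data complexity, each fixed candidate subset $S$ contributes a $\textsc{DP}$ condition (consistency and non-entailment in $\textsc{NPTime}$/$co\textsc{NPTime}$, maximality as a constant-size conjunction in $co\textsc{NPTime}$), the union over the constantly many subsets sits at a fixed level of $\textsc{BH}$, and $\textsc{BH}$ is closed under complement; for combined complexity, a deterministic $2\textsc{ExpTime}$ enumeration of subsets querying an $\textsc{N2ExpTime}$ oracle yields $\Delta_2$-$2\textsc{ExpTime}$. The only substantive difference is that the paper obtains the $\textsc{NPTime}$/$co\textsc{NPTime}$ and $\textsc{N2ExpTime}$/$co\textsc{N2ExpTime}$ bounds on the primitive checks by citing Theorem 2 of Magka et al.\ (2013), whereas you sketch a re-derivation whose central claim is not correct as stated: the Herbrand universe of $\mathsf{sk}(\Sigma)$ is infinite as soon as Skolem functions are present, and what R-acyclicity actually bounds polynomially in the data is the size of the stable models (equivalently, of the relevant portion of the grounding) --- this is exactly the content of the cited theorem, so you should invoke it rather than a polynomial Herbrand universe.
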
{
\vspace*{-3mm}
\noindent \begin{proof}
We first prove the data complexity. To do this, we need to define some notations. Let $\mathcal{R}$ be the schema of $\Sigma$. Given any subset $X$ of $\Sigma$, let $L^X$ be the set of all $\mathcal{R}$-databases $D$ such that\vspace{-.1cm}
\begin{enumerate}
\item
$D\cup X$ has at least one stable model, and\vspace{-.1cm}
\item $D\cup X\models_{s} Q$ does not hold, and\vspace{-.1cm}
\item for all $Y\subseteq\Sigma$ with $X\prec Y$, $D\cup Y$ has no stable model.
\end{enumerate}\vspace{-.1cm}
Let $L$ denote the union of $L^X$ for all subsets $X$ of $\Sigma$. By the definition of the rule {repair semantics}, it is easy to see that $\langle D, O\rangle \models Q$ iff there is no $X\subseteq\Sigma$ such that $D\in L^X$, iff $D$ does not belong to $L$. Thus, if the following claim is true, by the definition of $\textsc{BH}$ we then have the desired result. Notice that the complexity class $\textsc{BH}$ is closed under complement.

\smallskip
{\noindent\em Claim.} Given any subset $X$ of $\Sigma$, it is in \textsc{DP} (\wrt the size of input database $D$) to determine whether $D\in L^X$.
\smallskip

Now, it remains to show the claim. Fix a subset $X\subseteq\Sigma$. Let $L_1$ denote the set of all $\mathcal{R}$-databases such that conditions 1 and 2 hold, and let $L_2$ denote the set of all $\mathcal{R}$-databases such that the condition 3 holds. According to Theorem 2 in~\cite{ijcai2013Magka}, $L_1$ is in $\textsc{NPTime}$ and $L_2$ in $co\textsc{NPTime}$. (Note that, as $\Sigma$ and $X$ are fixed, the number of subsets $Y$ is independent on the size of input database; thus $L_2$ should be in $co\textsc{NPTime}$.) By definition, $L^X=L_1\cap L_2$ is in $\textsc{DP}$. This proves the data complexity.

Next, we show the combined complexity. It is clear that $\langle D, O\rangle\models Q$ holds iff there does not exist $S\subseteq\Sigma$ such that\vspace{-.1cm}
\begin{enumerate}
\item $D\cup S$ has at least one stable model, and\vspace{-.1cm}
\item $D\cup S\models_s Q$ does not hold, and\vspace{-.1cm}
\item for all $S'\subseteq\Sigma$ with $S\prec S'$, $D\cup S'$ has no stable models.
\end{enumerate}\vspace{-.1cm}
By Theorem 2 in~\cite{ijcai2013Magka} and an analysis similar to that in Theorem~\ref{thm:cmplx_wa_stra} (for combined complexity), it is not difficult to see that, fixing $S\subseteq\Sigma$, both conditions 1 and 2 are in $co\textsc{N2ExpTime}$, and condition 3 is in $\textsc{N2ExpTime}$. For ``there does not exist $S\subseteq\Sigma$", we can simply enumerate all subsets $S$, which can be done in $2^{|\Sigma|}$ times. Therefore, query answering under the mentioned semantics must be in $\Delta_2$-$2\textsc{ExpTime}$ for combined complexity, which is as desired.
\end{proof}}
\vspace*{-2mm}

{Now let us focus on guarded rules. The proof of the following is similar to that of Theorem~\ref{thm:cmplx_wa_stra}, but employs the complexity results in~\cite{ws2012Cali}. The only thing we should be careful about is the constraints.}

\begin{theorem}\label{thm:cmplx_grd_stra}
Let $O$ be a preference-based ontology $(\Sigma,\preceq)$, where $\Sigma$ is guarded and stratified, and $\preceq\,\in\{\subseteq,\leq,\subseteq_P,\leq_{P},$ $\leq_w\}$.
Given a database $D$ and a covered NBCQ $Q$, deciding whether $\langle D, O\rangle \models Q$ is
\textsc{PTime}-complete for data complexity, and $2\textsc{ExpTime}$-complete for combined complexity.
\end{theorem}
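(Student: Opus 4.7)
My proof plan is to reuse Algorithm~\ref{alg:PQA} and follow exactly the argument of Theorem~\ref{thm:cmplx_wa_stra}, but replace the Magka et al.\ complexity bounds by the corresponding results for guarded existential rules with stratified negation from~\cite{ws2012Cali}. The key observation that legitimizes this transplantation is that guardedness and stratification are both preserved under taking subsets of $\Sigma$: if $\Sigma$ is guarded and stratified, then so is every $S\subseteq\Sigma$, and the layering $\ell$ witnessing stratification for $\Sigma$ still witnesses it for $S$. Hence, inside each iteration of Algorithm~\ref{alg:PQA}, every membership test we perform is on a guarded stratified rule set, so the bounds of~\cite{ws2012Cali} apply uniformly.

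For the data complexity membership, I would fix $O=(\Sigma,\preceq)$ and the covered NBCQ $Q$, leaving only $D$ as input. The outer loop runs over at most $2^{|\Sigma|}=O(1)$ subsets, and for each such $S$ the test ``$D\cup S$ has a stable model'' together with the test ``$D\cup S\models_s Q$'' is in $\textsc{PTime}$ in $\|D\|$ by the data-complexity result of~\cite{ws2012Cali} for guarded existential rules with stratified negation (since $Q$ is covered, it is handled in their framework). The inner loop is again traversed a constant number of times, and $S\prec S'$ is decidable in constant time by Proposition~\ref{prop:cmplx_compare}. So the whole algorithm runs in $\textsc{PTime}$ in $\|D\|$. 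Hardness follows from $\textsc{PTime}$-hardness of Datalog query answering~\cite{csur2001DantsinEGV}, exactly as in Theorem~\ref{thm:cmplx_wa_stra}.

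For the combined complexity, the outer loop contributes a factor $2^{|\Sigma|}$, the inner loop contributes another factor $2^{|\Sigma|}$, and each iteration of the inner body now requires two $2\textsc{ExpTime}$ computations (stable-model existence and query entailment for guarded stratified rules, by the combined-complexity bound of~\cite{ws2012Cali}). Since $2^{|\Sigma|}\cdot 2^{|\Sigma|}\cdot 2^{2^{\|O\|^{\mathcal{O}(1)}}}=2^{2^{\|O\|^{\mathcal{O}(1)}}}$, the whole procedure remains in $2\textsc{ExpTime}$. The matching lower bound comes from the $2\textsc{ExpTime}$-hardness of conjunctive query answering on guarded existential rules~\cite{ws2012Cali}, via the same reduction used at the end of Theorem~\ref{thm:cmplx_wa_stra}: given $(D,\Sigma,Q)$, introduce a fresh $0$-ary symbol $q$, set $\Sigma^\ast=\Sigma\cup\{Q\to q\}$, and observe $D\cup\Sigma\models_s Q$ iff $\langle D,(\Sigma^\ast,\preceq)\rangle\models q$; note that $\Sigma^\ast$ is still guarded and can be stratified since $q$ is fresh.

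The subtlety the authors warn about, and the step I expect to need the most care, concerns constraints. Unlike existential rules, a constraint $\varphi\to\bot$ contains no head atom, so when we delete it from $\Sigma$ we remove a source of inconsistency but nothing is generated. I would treat constraints explicitly as follows: partition $S$ into its constraint part $S_c$ and its non-constraint part $S_r$, observe that $D\cup S$ has a stable model iff the (unique up to nulls) stratified model of $D\cup S_r$ satisfies every $\varphi\to\bot$ in $S_c$, and confirm that this satisfaction test falls within the $\textsc{PTime}$/$2\textsc{ExpTime}$ bounds of~\cite{ws2012Cali}. Once this is checked, the rest of the argument is a direct transcription of the proof of Theorem~\ref{thm:cmplx_wa_stra}.
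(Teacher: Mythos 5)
Your plan coincides with the paper's own treatment: the paper gives no separate proof for this theorem, only the remark that the argument of Theorem~\ref{thm:cmplx_wa_stra} goes through with the bounds of \cite{ws2012Cali} substituted for those of Magka et al., and that ``the only thing we should be careful about is the constraints'' --- exactly the two points you identify, with your explicit constraint-handling step filling in a detail the paper leaves implicit. The one nuance worth double-checking is the hardness reduction: for a non-atomic covered NBCQ $Q$ the added rule $Q\to q$ need not be guarded, so you should instantiate the reduction with an atomic query, for which $2\textsc{ExpTime}$-hardness of guarded rules already holds.
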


For guarded rules with full negations, we have some results as below, where the proof for data complexity is similar to that in Theorem~\ref{thm:cmplx_wa_full}, and the proof for combined complexity is similar to that in Theorem~\ref{thm:cmplx_wa_stra}. Both results rely on the corresponding complexity results in~\cite{kr2014Gottlob}.

\begin{theorem}\label{thm:cmplx_grd_full}
Let $O$ be a preference-based ontology $(\Sigma,\preceq)$, where $\Sigma$ is guarded,
and $\preceq\,\in\{\subseteq,\le,\subseteq_P,\le_P,\le_w\}$.
Then, given a database $D$ and a covered
NBCQ $Q$, deciding whether $\langle D, O\rangle \models Q$ is in
\textsc{BH} for data complexity and $2\textsc{ExpTime}$-complete for combined complexity.
\end{theorem}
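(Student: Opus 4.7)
The plan is to adapt the two proofs already given---Theorem \ref{thm:cmplx_wa_full} for the data complexity part and Theorem \ref{thm:cmplx_wa_stra} for the combined complexity part---after replacing the R-acyclic complexity bounds by the guarded bounds from \cite{kr2014Gottlob}. Recall that for guarded rule sets (even with full negations) under the stable model semantics, checking stable model existence and query answering are in $\textsc{NPTime}$ and $co\textsc{NPTime}$ respectively for data complexity, and in $2\textsc{ExpTime}$ for combined complexity; these are the only external inputs I will use.

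For the data complexity, I would copy the $L^X$ construction from the proof of Theorem \ref{thm:cmplx_wa_full}. Fix the preference-based ontology $O=(\Sigma,\preceq)$ and the query $Q$. For each $X\subseteq\Sigma$, let $L^X$ denote the set of $\mathcal{R}$-databases $D$ such that (i) $D\cup X$ has a stable model, (ii) $D\cup X\not\models_s Q$, and (iii) for every $Y\subseteq\Sigma$ with $X\prec Y$, $D\cup Y$ has no stable model. Conditions (i) and (ii) are both in $\textsc{NPTime}$ by the cited guarded bounds; condition (iii) is a conjunction of constantly many (since $\Sigma$ is fixed) $co\textsc{NPTime}$ checks, plus constantly many $\textsc{PTime}$ preference comparisons by Proposition \ref{prop:cmplx_compare}, hence in $co\textsc{NPTime}$. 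So $L^X$ is in $\textsc{DP}$. The complement of the problem is the union $L=\bigcup_{X\subseteq\Sigma}L^X$, a finite union of $\textsc{DP}$ languages, hence in $\textsc{BH}$; closure of $\textsc{BH}$ under complement gives membership of the original problem.

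For the combined complexity, I would run Algorithm \ref{alg:PQA} and mimic the bookkeeping of Theorem \ref{thm:cmplx_wa_stra}. The inner body now tests stable model existence and query non-entailment for $D\cup S$ (and $D\cup S'$) over a guarded normal rule set, which by \cite{kr2014Gottlob} can be done in $\textsc{DTime}(2^{2^{||O||^{\mathcal{O}(1)}}})$. The loop condition $S\prec S'$ is polynomial by Proposition \ref{prop:cmplx_compare}, and the two nested loops iterate at most $2^{|\Sigma|}\cdot 2^{|\Sigma|}$ times, an overhead absorbed by the $2\textsc{ExpTime}$ outer bound. Hence membership follows. For hardness, I would use exactly the reduction from the proof of Theorem \ref{thm:cmplx_wa_stra}: given an instance of ordinary guarded query answering $D\cup\Sigma\models_s Q$, form $\Sigma^\ast=\Sigma\cup\{Q\rightarrow q\}$ with $q$ a fresh $0$-ary symbol and observe that $\langle D,(\Sigma^\ast,\preceq)\rangle\models q$ iff $D\cup\Sigma\models_s Q$; this reduces the known $2\textsc{ExpTime}$-hard problem (query answering for guarded rules, even without negation) to our problem under any of the five preferences.

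The main obstacle I anticipate is not the arithmetic of the complexity bounds (which absorbs the $2^{|\Sigma|}$ enumeration easily for $2\textsc{ExpTime}$, and the constant-size enumeration trivially for $\textsc{BH}$) but a subtle point in the data-complexity argument: because full negation is allowed, $\Sigma\cup X$ can have exponentially many stable models, and one must appeal carefully to the \emph{$\textsc{NPTime}$/$co\textsc{NPTime}$} bounds in \cite{kr2014Gottlob} rather than to a stratified-style evaluation. Once those bounds are invoked, exactly as in Theorem \ref{thm:cmplx_wa_full}, the $\textsc{DP}$ classification of each $L^X$ and hence the $\textsc{BH}$ bound follow mechanically.
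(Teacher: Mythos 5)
Your proposal matches the paper's own argument: the paper proves this theorem precisely by re-running the $L^X$/\textsc{DP}/\textsc{BH} construction of Theorem~\ref{thm:cmplx_wa_full} for data complexity and the Algorithm~\ref{alg:PQA} bookkeeping (plus the $\Sigma^\ast=\Sigma\cup\{Q\rightarrow q\}$ hardness reduction) of Theorem~\ref{thm:cmplx_wa_stra} for combined complexity, substituting the guarded-rule complexity bounds of \cite{kr2014Gottlob} for the R-acyclic ones. Your treatment of the subtle point---that with full negation one must invoke the $\textsc{NPTime}$/$co\textsc{NPTime}$ guarded bounds rather than a stratified-style evaluation---is exactly the distinction the paper draws between the two proof templates.
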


Finally, we conclude the results of this section as follows:
\vspace*{-5mm}
\begin{table}[!h]\label{table:1}
\setlength{\belowcaptionskip}{0.01pt}
\tabcolsep 0pt
\begin{center}
\begin{tabular}{c||c|c}
  & ~~\textbf{Data complexity}\, & ~~\textbf{Combined complexity}\, \\
  \hline
  \,RA + RS~~~\,      & ~\textsc{PTime}-complete    & ~\textsc{2ExpTime}-complete \\
  RA + Full~~   & ~in \textsc{BH} & ~in $\Delta_2$-$2\textsc{ExpTime}$ \\
  ~~~G + Stra~~    & ~\textsc{PTime}-complete    & ~\textsc{2ExpTime}-complete \\
  ~~~G + Full~~ & ~in \textsc{BH}            & ~\textsc{2ExpTime}-complete \\
  \hline
\end{tabular}
\end{center}
\vspace*{-3mm}
\caption{
The data and combined complexity of Boolean query answering over normal rule sets under preference-based semantics for 5 types
of preferred rule repairs, including $\subseteq$, $\leq$, $\subseteq_{P}$, $\leq_{P}$, and $\leq_{w}$.
Here, ``RA" means ``R-acyclic rule sets",
``G" means ``guarded rule sets",
 ``RS" means ``with R-stratified negations",
``Stra" means ``with stratified negations", and
``Full" means ``with full negations".
}
\end{table}
\vspace*{-7mm}

\section{Experimental Evaluation}
To demonstrate the effectiveness, we have implemented a prototype system for query answering of R-acyclic rule languages under the rule-{repair semantics} \wrt $\leq$, $\subseteq_{P}$, $\leq_{P}$ and $\leq_{w}$,
by calling a state-of-the-art ASP solver.
\vspace*{-2mm}
\subsection{From Query Answering to ASP}
To improve the efficiency,
we adopt particular algorithm for each rule-{repair semantics}.
The algorithms are all based on breadth-first search.
Finding rule repairs \wrt $\subseteq$ uses the basic process illustrated in Alg. \ref{alg:PQA},
and exponential checking will be conducted during the process.
For rule repairs \wrt $\leq$, though it works better than $\subseteq$ for the reason that there is no need to search the rest levels once it finds consistent sets.
As for rule repairs \wrt $\subseteq_{P}$,
we design an algorithm which iterates over the rules from low to high prioritization.
Once finding consistent results in the rules with lower prioritization, the searching stops.
It's known that $\leq_{P}$ can be translated into $\leq_{w}$, but not vice versa.
As for $\leq_{w}$, we search by deleting rules from the lowest weight to the greatest.

As a whole, the algorithms for situations with prioritization or weights will be much more efficient if the rule set satisfies the following two conditions:\vspace{-1mm}
\begin{itemize}
	\item The size of rules with lower prioritization (less weights) is very small, even though the whole rule set is large;\vspace{-1mm}
	\item The rule set can be consistent by only deleting rules with lower prioritization (less weights).\vspace{-1mm}
\end{itemize}
These conditions can be easily found in real applications because incorrectness are mostly caused by the rules newly added and the amount of these rules is normally small.

\vspace*{-2mm}
\subsection{Experiments}
We developed a prototype system {\sf QAIER}\footnote{http://ss.sysu.edu.cn/\%7ewh/qaier.html}
(Query Answering with Inconsistent Existential Rules)
in C++.
{\sf QAIER} can answer queries with inconsistent R-acyclic rule sets.
When it needs to check the existence of stable models,
{\sf QAIER} invokes an ASP solver {\sf clingo-4.4.0}\footnote{$\textrm{clingo-4.4.0}$.~http://sourceforge.net/projects/potassco/files/clingo/}.
\vspace*{-1.5mm}
\begin{table}[!h]
\setlength{\belowcaptionskip}{0.01pt}
\vspace*{-0.5mm}
\tabcolsep 0pt
\begin{center}
\tiny
\def\temptablewidth{1.02\columnwidth}
\begin{tabular}{|p{1.15cm}|m{1.1cm}|m{0.7cm}|m{1.1cm}|m{1.1cm}|m{1.1cm}|m{1.1cm}|m{1.1cm}|}
\hline
\cline{2-7}  {\sf ~Instance id} &
\multicolumn{1}{c|}{ {\sf \#facts}~} & \multicolumn{1}{c|}{ {\sf \#negs}~} & \multicolumn{1}{c|}{{\sf $t_\subseteq$}} &
\multicolumn{1}{c|}{{\sf $t_\leq$}~} & \multicolumn{1}{c|}{{\sf $t_{\subseteq_P}$}~} & \multicolumn{1}{c|}{{\sf $t_{\leq_P}$}} &
\multicolumn{1}{c|}{{\sf $t_{\leq_w}$}~}  \\
 \hline
 \hline
$~{\sf d6t3}$~&\raggedleft $6000~$ &\raggedleft  $9~$ &\raggedleft $1757.350~$ &\raggedleft $956.663~$ &\raggedleft $11.366~$ &\raggedleft $12.563~$  &\multicolumn{1}{r|}{$~~17.457~$}  \\
$~{\sf d6t5}$~&\raggedleft $6000~$ &\raggedleft  $11~$ &\centering --- &\raggedleft $968.864~$ &\raggedleft $19.073~$ &\raggedleft $32.445~$  &\multicolumn{1}{r|}{$~~47.449~$}  \\
$~{\sf d12t5}$~&\raggedleft $12000~$ &\raggedleft  $11~$ &\centering ---  &\raggedleft $1743.244~$ &\raggedleft $35.711~$ &\raggedleft $76.927~$  &\multicolumn{1}{r|}{$~~50.160~$}  \\
$~{\sf d30t5}$~&\raggedleft $30000~$ &\raggedleft  $11~$ &\centering ---  &\centering --- &\raggedleft $81.830~$ &\raggedleft $187.898~$  &\multicolumn{1}{r|}{$~~124.630~$}  \\
$~{\sf d110t5}$~&\raggedleft $110449~$ &\raggedleft  $11~$ &\centering --- &\centering --- &\raggedleft $365.412~$ &\raggedleft $267.529~$  &\multicolumn{1}{r|}{$~~149.574~$}  \\
$~{\sf d252t3}$~&\raggedleft $252498~$ &\raggedleft  $9~$ &\centering ---  &\centering --- &\raggedleft $278.426~$ &\raggedleft $466.643~$  &\multicolumn{1}{r|}{$~~147.217~$}  \\
$~{\sf d252t5}$~&\raggedleft $252498~$ &\raggedleft  $11~$ &\centering --- &\centering --- &\raggedleft $843.653~$ &\raggedleft $579.122~$  &\multicolumn{1}{r|}{$~~186.371~$}  \\
$~{\sf d500t3}$~&\raggedleft $500000~$ &\raggedleft  $9~$ &\centering ---  &\centering --- &\raggedleft $308.647~$ &\raggedleft $605.476~$  &\multicolumn{1}{r|}{$~~168.928~$}  \\
$~{\sf d500t5}$~&\raggedleft $500000~$ &\raggedleft  $11~$ &\centering --- &\centering --- &\raggedleft $1464.986~$ &\raggedleft $619.252~$ &\multicolumn{1}{r|}{$~~200.227~$}  \\
$~{\sf d686t3}$~&\raggedleft $686028~$ &\raggedleft  $9~$ &\centering ---  &\centering --- &\raggedleft $410.804~$ &\raggedleft $615.243~$  &\multicolumn{1}{r|}{$~~230.507~$}  \\
$~{\sf d686t5}$~&\raggedleft $686028~$ &\raggedleft  $11~$ &\centering --- &\centering --- &\centering --- &\centering ---  &\multicolumn{1}{r|}{$~~247.218~$}  \\
$~{\sf d123t3}$~&\raggedleft $1236999~$ &\raggedleft  $9~$ &\centering ---  &\centering --- &\centering --- &\raggedleft $727.710~$ &\multicolumn{1}{r|}{$~~345.231~$}  \\
$~{\sf d1236t5}$~&\raggedleft $1236999~$ &\raggedleft  $11~$ &\centering ---  &\centering --- &\centering --- &\centering ---  &\multicolumn{1}{r|}{$~~432.367~$}  \\
\hline
\end{tabular}
\vspace*{-3mm}
\caption{Experiments for the Modified LUBM}
\end{center}
\end{table}

\vspace*{-6mm}
\begin{table}[!h]
\setlength{\belowcaptionskip}{0.01pt}
\vspace*{-1.5mm}
\tabcolsep 0pt
\begin{center}
\tiny
\def\temptablewidth{1.02\columnwidth}
\begin{tabular}{|p{1.15cm}|m{1.1cm}|m{0.7cm}|m{1.1cm}|m{1.1cm}|m{1.1cm}|m{1.1cm}|m{1.1cm}|}
\hline
\cline{2-7}  {\sf ~Instance id} &
\multicolumn{1}{c|}{{\sf \#rules}~} & \multicolumn{1}{c|}{~ {\sf \#negs}~} & \multicolumn{1}{c|}{{\sf $t_\subseteq$}} &
\multicolumn{1}{c|}{{\sf $t_\leq$}~} & \multicolumn{1}{c|}{{\sf $t_{\subseteq_P}$}~} & \multicolumn{1}{c|}{{\sf $t_{\leq_P}$}} &
\multicolumn{1}{c|}{{\sf $t_{\leq_w}$}~}  \\
 \hline
 \hline
$~{\sf c1t1}$~&\raggedleft $170~$ &\raggedleft  $9~$ &\raggedleft $470.066~$ &\raggedleft $9.284~$ &\raggedleft $0.930~$ &\raggedleft $0.916~$  &\multicolumn{1}{r|}{$~~0.335~$}  \\
$~{\sf c1t3}$~&\raggedleft $170~$ &\raggedleft  $10~$ &\raggedleft $909.089~$ &\raggedleft $723.245~$ &\raggedleft $7.057~$ &\raggedleft $6.556~$  &\multicolumn{1}{r|}{$~~4.336~$}  \\
$~{\sf c1t5}$~&\raggedleft $170~$ &\raggedleft  $12~$ &\raggedleft $911.150~$ &\raggedleft $735.238~$&\raggedleft $28.906~$ &\raggedleft $28.344~$  &\multicolumn{1}{r|}{$~~12.284~$}  \\
$~{\sf c2t1}$~&\raggedleft $253~$ &\raggedleft  $9~$ &\raggedleft $1155.216~$ &\raggedleft $19.435~$ &\raggedleft $8.207~$ &\raggedleft $7.609~$  &\multicolumn{1}{r|}{$~~0.842~$}  \\
$~{\sf c2t3}$~&\raggedleft $253~$ &\raggedleft  $10~$ &\raggedleft $1171.904~$ &\raggedleft $1282.573~$ &\raggedleft $32.750~$ &\raggedleft $32.766~$  &\multicolumn{1}{r|}{$~~49.773~$}  \\
$~{\sf c2t5}$~&\raggedleft $253~$ &\raggedleft  $12~$ &\raggedleft $1136.926~$ &\raggedleft $1253.325~$ &\raggedleft $127.786~$ &\raggedleft $131.403~$  &\multicolumn{1}{r|}{$~~169.404~$}  \\
$~{\sf c3t1}$~&\raggedleft $361~$ &\raggedleft  $9~$ &\centering --- &\centering --- &\raggedleft $1423.179~$ &\raggedleft $1291.433~$  &\multicolumn{1}{r|}{$~~35.421~$}  \\
$~{\sf c3t3}$~&\raggedleft $361~$ &\raggedleft  $10~$ &\centering ---  &\centering --- &\centering --- &\centering ---  &\multicolumn{1}{c|}{---}  \\
\hline
\end{tabular}
\vspace*{-3mm}
\caption{Experiments for the Modified ChEBI}
\end{center}
\end{table}
\vspace*{-9.5mm}

\paragraph{Benchmarks}
To estimate
the performance of {\sf QAIER} in a view of data complexity,
we use the modified LUBM\footnote{$\textrm{LUBM}$.~http://swat.cse.lehigh.edu/projects/lubm/}
as a benchmark.
Because LUBM is not R-acyclic, we modified LUBM by changing atoms and deleting rules to make sure that modified LUBM is R-acyclic.
We use {\sf HermiT} \footnote{$\textrm{HermiT}$.~http://www.hermit-reasoner.com/} to transform the modified LUBM ontology into DL-clauses,
and replace at-least number restrictions in head atoms with existential
quantification,
then get 127 rules.
Next we add default negations or constraints, and introduce the prioritization and weight under rule {repair semantics}.
Considering that the number of default negations or constraints would not be very large,
we introduce 9-11 for each instance.
The introduced prioritization or weight depends on the reliability of the rules.
We use the EUDG\footnote{$\textrm{EUDG}$.http://www.informatik.uni-bremen.de/\!$\sim$clu/combined/} to generate a database.
{By $dXtY$ (Table 2) we mean that the instance involves $X$ thousands facts and $Y$ unreliable rules.}
For the performance in the view of combined complexity,
we use the modified ChEBI \cite{ijcai2013Magka} as a benchmark.
{By $cXtY$ (Table 3) we mean that the instance involves $X$ molecules and chemical classes and $Y$ unreliable rules.}
\vspace*{-5mm}
\paragraph{Experimental results}
Table 2 (Table 3, \!\!respectively)\footnote{All experiments run in Linux Ubuntu 14.04.1 LTS on a HP compaq 8200 elite with a 3.4GHz Intel Core i7 processor and 4G 1333 MHz memory. Real numbers in the tables figure the run time (in seconds) of query answering. If the time exceeds 1800 seconds, we write it as ``--".~~$\#facts$, $\#negs$, and $\#rules$ means the number of facts in database, default negations and constraints, and rules respectively.}
shows the data (combined, respectively) complexity performance among rule repairs scale up, when $\#facts$ and $\#negs$ ($\#rules$ and $\#negs$, respectively) grow.
$t_\subseteq$, $t_\leq$, $t_{\subseteq_P}$, $t_{\leq_P}$, or $t_{\leq_w}$
records the queries answering time. Each instance is computed three times and taken the average.
Because {\sf QAIER} computes all the stable models,
the sizes or the types of queries
are not the important issues.
Clearly, rule repairs \wrt $\subseteq_P$, $\leq_P$, and $\leq_w$ have better performances than those of $\subseteq$ and $\leq$, which is due to the few number of unreliable rules.
This condition can be easily found in realistic cases because most of the rules are reliable, while the latest learned rules considered unreliable are few.

\vspace*{-0.5mm}
\section{Related Work and Conclusions}
\vspace*{-0.5mm}
\looseness=-1
In terms of changing the rule set/Tbox for repair,
\citeauthor{MeyerLBP06} \shortcite{MeyerLBP06} proposed an algorithm running in \textsc{ExpTime} that finds maximally concept-satisfiable terminologies in $\mathcal{ALC}$.
\citeauthor{dlog2010Scharrenbach} \shortcite{dlog2010Scharrenbach} showed that probabilistic description logics can be used to resolve conflicts and receive a consistent
knowledge base from which inferences can be drawn again.
Also \citeauthor{ijcai2009QiD} \shortcite{ijcai2009QiD} proposed model-based revision operators for terminologies in DL, and
\citeauthor{dlog2014Wang} \shortcite{dlog2014Wang}
introduced a model-theoretic approach to ontology revision.
In order to address uncertainty arising from inconsistency,
\citeauthor{amai2013Gottlob} \shortcite{amai2013Gottlob}
extended the Datalog$^\pm$ language with probabilistic uncertainty
based on Markov logic networks.
More generally,
several works have focused on reasoning with inconsistent ontologies,
see \cite{ijcai2005Huang,semweb2005Haase} and references therein.
Surprisingly, this paper shows that for R-acyclic existential rules with R-stratified or guarded existential rules with stratified negations
both the data complexity and combined complexity of query answering under the rule {repair semantics} do not increase.

\looseness=-2
We have developed a general framework {to
handle inconsistent existential rules with default negations.}
Within this framework,
we analyzed the data and combined complexity of
inconsistency-tolerant query answering
under rule {repair semantics}.
We proposed approaches simulating queries answering under rule repairs with calling ASP solvers
and developed a prototype system called {\sf QAIER}.
Our experiments show that {\sf QAIER} can scale up to
large databases under rule repairs in practice.
Future work will focus on identifying first order rewritable classes under rule {repair semantics}.

\vspace{-.1cm}
\section{Acknowledgments}
\vspace{-.03cm}
\looseness=-1
We thank the reviewers for their comments and suggestions for improving the paper.
The authors would like to thank Yongmei Liu and her research group for their helpful and informative discussions. {
Hai Wan's research was in part supported by
the National Natural Science Foundation of China under grant 61573386, Natural Science Foundation of Guangdong Province of China under grant S2012010009836,
and Guangzhou Science and Technology Project (No. 2013J4100058).}

\small
\bibliographystyle{aaai}

\begin{thebibliography}{}

\end{thebibliography}


\begin{thebibliography}{}

\bibitem[\protect\citeauthoryear{Alviano and Pieris}{2015}]{AlvianoP15}
Alviano, M., and Pieris, A.
\newblock 2015.
\newblock Default negation for non-guarded existential rules.
\newblock In {\em Proceedings of the 34th {ACM} Symposium on Principles of
  Database Systems, {PODS} 2015, Melbourne, Australia, May 31 - June 4, 2015},
  79--90.

\bibitem[\protect\citeauthoryear{Arenas, Bertossi, and
  Chomicki}{1999}]{pods1999Arenas}
Arenas, M.; Bertossi, L.~E.; and Chomicki, J.
\newblock 1999.
\newblock Consistent query answers in inconsistent databases.
\newblock In {\em Proceedings of the Eighteenth {ACM} {SIGACT-SIGMOD-SIGART}
  Symposium on Principles of Database Systems, May 31 - June 2, 1999,
  Philadelphia, Pennsylvania, {USA}},  68--79.

\bibitem[\protect\citeauthoryear{Bienvenu, Bourgaux, and
  Goasdou{\'{e}}}{2014}]{aaai14BienvenuBG}
Bienvenu, M.; Bourgaux, C.; and Goasdou{\'{e}}, F.
\newblock 2014.
\newblock Querying inconsistent description logic knowledge bases under
  preferred repair semantics.
\newblock In {\em Proceedings of the Twenty-Eighth {AAAI} Conference on
  Artificial Intelligence, July 27 -31, 2014, Qu{\'{e}}bec City, Qu{\'{e}}bec,
  Canada.},  996--1002.

\bibitem[\protect\citeauthoryear{Cal{\`{\i}}, Gottlob, and
  Lukasiewicz}{2012}]{ws2012Cali}
Cal{\`{\i}}, A.; Gottlob, G.; and Lukasiewicz, T.
\newblock 2012.
\newblock A general datalog-based framework for tractable query answering over
  ontologies.
\newblock {\em Journal Web Semantics} 14:57--83.

\bibitem[\protect\citeauthoryear{Chang and Kadin}{1996}]{siamcomp1996Chang96}
Chang, R., and Kadin, J.
\newblock 1996.
\newblock The boolean hierarchy and the polynomial hierarchy: {A} closer
  connection.
\newblock {\em SIAM Journal on Computing} 25(2):340--354.

\bibitem[\protect\citeauthoryear{Chomicki}{2007}]{icdt2007Chomic}
Chomicki, J.
\newblock 2007.
\newblock Consistent query answering: Five easy pieces.
\newblock In {\em Proceedings of 11th International Conference, Database Theory
  - {ICDT} 2007, Barcelona, Spain, January 10-12, 2007,},  1--17.

\bibitem[\protect\citeauthoryear{Dantsin \bgroup et al\mbox.\egroup
  }{2001}]{csur2001DantsinEGV}
Dantsin, E.; Eiter, T.; Gottlob, G.; and Voronkov, A.
\newblock 2001.
\newblock Complexity and expressive power of logic programming.
\newblock {\em ACM Computing Surveys} 33(3):374--425.

\bibitem[\protect\citeauthoryear{Du, Qi, and Shen}{2013}]{kais2013Du}
Du, J.; Qi, G.; and Shen, Y.
\newblock 2013.
\newblock Weight-based consistent query answering over inconsistent
  $\mathcal{SHIQ}$ knowledge bases.
\newblock {\em Knowledge Information System} 34(2):335--371.

\bibitem[\protect\citeauthoryear{Eiter and Gottlob}{1995}]{jacm1995Eiter}
Eiter, T., and Gottlob, G.
\newblock 1995.
\newblock The complexity of logic-based abduction.
\newblock {\em Journal of the ACM} 42(1):3--42.

\bibitem[\protect\citeauthoryear{Ferraris, Lee, and
  Lifschitz}{2011}]{FerrarisLL11}
Ferraris, P.; Lee, J.; and Lifschitz, V.
\newblock 2011.
\newblock Stable models and circumscription.
\newblock {\em Artifical Intelligence} 175(1):236--263.

\bibitem[\protect\citeauthoryear{Gelfond and Lifschitz}{1988}]{GelfondL88}
Gelfond, M., and Lifschitz, V.
\newblock 1988.
\newblock The stable model semantics for logic programming.
\newblock In {\em Proceedings of the Fifth International Conference and
  Symposium Logic Programming, Seattle, Washington, August 15-19, 1988 {(2}
  Volumes)},  1070--1080.

\bibitem[\protect\citeauthoryear{Gottlob \bgroup et al\mbox.\egroup
  }{2013}]{amai2013Gottlob}
Gottlob, G.; Lukasiewicz, T.; Martinez, M.~V.; and Simari, G.~I.
\newblock 2013.
\newblock Query answering under probabilistic uncertainty in datalog+/-
  ontologies.
\newblock {\em Annals of Mathematics and Artificial Intelligence} 69(1):37--72.

\bibitem[\protect\citeauthoryear{Gottlob \bgroup et al\mbox.\egroup
  }{2014}]{kr2014Gottlob}
Gottlob, G.; Hernich, A.; Kupke, C.; and Lukasiewicz, T.
\newblock 2014.
\newblock Stable model semantics for guarded existential rules and description
  logics.
\newblock In {\em Proceedings of the Fourteenth International Conference
  Principles of Knowledge Representation and Reasoning, {KR} 2014, Vienna,
  Austria, July 20-24, 2014},  258--267.

\bibitem[\protect\citeauthoryear{Haase \bgroup et al\mbox.\egroup
  }{2005}]{semweb2005Haase}
Haase, P.; van Harmelen, F.; Huang, Z.; Stuckenschmidt, H.; and Sure, Y.
\newblock 2005.
\newblock A framework for handling inconsistency in changing ontologies.
\newblock In {\em Proceedings of The Semantic Web - {ISWC} 2005, 4th
  International Semantic Web Conference, {ISWC} 2005, Ireland, November 6-10,
  2005},  353--367.

\bibitem[\protect\citeauthoryear{Huang, van Harmelen, and ten
  Teije}{2005}]{ijcai2005Huang}
Huang, Z.; van Harmelen, F.; and ten Teije, A.
\newblock 2005.
\newblock Reasoning with inconsistent ontologies.
\newblock In {\em Proceedings of the Nineteenth International Joint Conference
  on Artificial Intelligence, {IJCAI} 2005, Edinburgh, Scotland, UK, July
  30-August 5, 2005},  454--459.

\bibitem[\protect\citeauthoryear{Kalyanpur \bgroup et al\mbox.\egroup
  }{2006}]{KalyanpurPSG06}
Kalyanpur, A.; Parsia, B.; Sirin, E.; and Grau, B.~C.
\newblock 2006.
\newblock Repairing unsatisfiable concepts in {OWL} ontologies.
\newblock In {\em Proceedings of the Semantic Web: Research and Applications,
  3rd European Semantic Web Conference, {ESWC} 2006, Budva, Montenegro, June
  11-14, 2006,},  170--184.

\bibitem[\protect\citeauthoryear{Lehmann \bgroup et al\mbox.\egroup
  }{2011}]{ws2011Lehmann}
Lehmann, J.; Auer, S.; B{\"{u}}hmann, L.; and Tramp, S.
\newblock 2011.
\newblock Class expression learning for ontology engineering.
\newblock {\em Journal Web Semantics} 9(1):71--81.

\bibitem[\protect\citeauthoryear{Lembo \bgroup et al\mbox.\egroup
  }{2010}]{rr2010LemboLRRS}
Lembo, D.; Lenzerini, M.; Rosati, R.; Ruzzi, M.; and Savo, D.~F.
\newblock 2010.
\newblock Inconsistency-tolerant semantics for description logics.
\newblock In {\em Proceedings of Web Reasoning and Rule Systems - Fourth
  International Conference, {RR} 2010, Bressanone/Brixen, Italy, September
  22-24, 2010.},  103--117.

\bibitem[\protect\citeauthoryear{Lukasiewicz \bgroup et al\mbox.\egroup
  }{2015}]{aaai2015Lukasiewicz}
Lukasiewicz, T.; Martinez, M.~V.; Pieris, A.; and Simari, G.~I.
\newblock 2015.
\newblock From classical to consistent query answering under existential rules.
\newblock In {\em Proceedings of the Twenty-Ninth {AAAI} Conference on
  Artificial Intelligence, January 25-30, 2015, Austin, {USA.}}

\bibitem[\protect\citeauthoryear{Lukasiewicz, Martinez, and
  Simari}{2012}]{ecai2012Lukasiewicz}
Lukasiewicz, T.; Martinez, M.~V.; and Simari, G.~I.
\newblock 2012.
\newblock Inconsistency handling in datalog+/- ontologies.
\newblock In {\em Proceedings of 20th European Conference on Artificial
  Intelligence. {ECAI} 2012 Including Prestigious Applications of Artificial
  Intelligence {(PAIS-2012)} System Demonstrations Track, Montpellier, France,
  August 27-31 , 2012},  558--563.

\bibitem[\protect\citeauthoryear{Lukasiewicz, Martinez, and
  Simari}{2013}]{otm2013Lukasiewicz}
Lukasiewicz, T.; Martinez, M.~V.; and Simari, G.~I.
\newblock 2013.
\newblock Complexity of inconsistency-tolerant query answering in datalog+/-.
\newblock In {\em Informal Proceedings of the 26th International Workshop on
  Description Logics, Ulm, Germany, July 23 - 26, 2013},  488--500.

\bibitem[\protect\citeauthoryear{Magka, Kr{\"o}tzsch, and
  Horrocks}{2013}]{ijcai2013Magka}
Magka, D.; Kr{\"o}tzsch, M.; and Horrocks, I.
\newblock 2013.
\newblock Computing stable models for nonmonotonic existential rules.
\newblock In {\em Proceedings of the 23rd International Joint Conference on
  Artificial Intelligence, {IJCAI} 2013, Beijing, China, August 3-9, 2013},
  1031--1038.

\bibitem[\protect\citeauthoryear{Meyer \bgroup et al\mbox.\egroup
  }{2006}]{MeyerLBP06}
Meyer, T.~A.; Lee, K.; Booth, R.; and Pan, J.~Z.
\newblock 2006.
\newblock Finding maximally satisfiable terminologies for the description logic
  {ALC}.
\newblock In {\em Proceedings of the Twenty-First National Conference on
  Artificial Intelligence and the Eighteenth Innovative Applications of
  Artificial Intelligence Conference, July 16-20, 2006, Boston, Massachusetts,
  {USA}},  269--274.

\bibitem[\protect\citeauthoryear{Qi and Du}{2009}]{ijcai2009QiD}
Qi, G., and Du, J.
\newblock 2009.
\newblock Model-based revision operators for terminologies in description
  logics.
\newblock In {\em Proceedings of the 21st International Joint Conference on
  Artificial Intelligence {IJCAI} 2009, Pasadena, California, USA, July 11-17,
  2009},  891--897.

\bibitem[\protect\citeauthoryear{Scharrenbach \bgroup et al\mbox.\egroup
  }{2010}]{dlog2010Scharrenbach}
Scharrenbach, T.; Gr{\"{u}}tter, R.; Waldvogel, B.; and Bernstein, A.
\newblock 2010.
\newblock Structure preserving tbox repair using defaults.
\newblock In {\em Proceedings of the 23rd International Workshop on Description
  Logics {(DL} 2010), Waterloo, Ontario, Canada, May 4-7, 2010},  384--395.

\bibitem[\protect\citeauthoryear{Wang \bgroup et al\mbox.\egroup
  }{2014}]{dlog2014Wang}
Wang, Z.; Wang, K.; Qi, G.; Zhuang, Z.; and Li, Y.
\newblock 2014.
\newblock Instance-driven tbox revision in dl-lite.
\newblock In {\em Informal Proceedings of the 27th International Workshop on
  Description Logics, Vienna, Austria, July 17-20, 2014.},  734--745.

\bibitem[\protect\citeauthoryear{Zhang, Zhang, and You}{2015}]{aaai2015heng}
Zhang, H.; Zhang, Y.; and You, J.-H.
\newblock 2015.
\newblock Existential rule languages with finite chase: Complexity and
  expressiveness.
\newblock In {\em Proceedings of the Twenty-Ninth {AAAI} Conference on
  Artificial Intelligence, January 25-30, 2015, Austin, Texas, {USA.}}

\end{thebibliography}

\end{document}